\setlist{nolistsep}
\newcommand{\E}{\mathbb E}
\newcommand{\R}{\mathbb R}
\newcommand{\N}{\mathcal N}
\newcommand{\dP}{\mathbb{P}}
\newcommand{\dQ}{\mathbb{Q}}
\newcommand{\dF}{\mathbb{F}}
\newtheorem{remark}{Remark}
\newtheorem{theorem}{Theorem}
\newtheorem{proposition}{Proposition}
\title{
Minimax Lower Bounds for Linear Independence Testing
}
\author{%
  Aaditya Ramdas$^*$ \\
  Departments of EECS and Statistics\\
  University of California, Berkeley\\
  \texttt{aramdas@berkeley.edu}
  \and
  David Isenberg\footnote{These authors contributed equally to this work.}\\
  Machine Learning Department\\
  Carnegie Mellon University\\
  \texttt{dki@andrew.cmu.edu}
  \and
    Aarti Singh\\
  Machine Learning Department\\
  Carnegie Mellon University\\
  \texttt{aarti@cs.cmu.edu}
 \and
 Larry Wasserman\\
 Department of Statistics\\
 Carnegie Mellon University\\
 \texttt{larry@stat.cmu.edu}
  }
\begin{document}
\maketitle

\vspace{-5mm}
\begin{abstract}
Linear independence testing is a fundamental information-theoretic and statistical problem that can be posed as follows: given $n$ points $\{(X_i,Y_i)\}^n_{i=1}$ from a $p+q$ dimensional multivariate distribution where $X_i \in \mathbb{R}^p$ and $Y_i \in\mathbb{R}^q$, determine whether $a^T X$ and $b^T Y$ are uncorrelated for every $a \in \mathbb{R}^p, b\in \mathbb{R}^q$ or not. We give  minimax lower bound for this problem (when $p+q,n \to \infty$, $(p+q)/n \leq \kappa < \infty$, without sparsity assumptions). In summary, our results  imply that  $n$ must be at least as large as $\sqrt {pq}/\|\Sigma_{XY}\|_F^2$ for any procedure (test) to have non-trivial power, where $\Sigma_{XY}$ is the cross-covariance matrix of $X,Y$.
We also provide some evidence that the lower bound is tight, by connections to two-sample testing and regression in specific settings. 
\end{abstract}


\section{Introduction}

Linear independence testing is a fundamental problem in information theory and statistical decision theory. One formulation of the problem is as follows: 
\begin{quote}
Given $n$ points $\{(X_i,Y_i)\}^n_{i=1}$ drawn i.i.d. from a $p+q$ dimensional multivariate distribution where $X_i \in \R^p$ and $Y_i \in\R^q$, determine whether $a^T X$ and $b^T Y$ are uncorrelated for all $a \in \R^p, b\in \R^q$ or not.
\end{quote}
Note that for Gaussian distributions, linear independence testing is equivalent to independence testing, and hence in that special case, the problem is equivalent to
\begin{quote}
Given $n$ i.i.d. points $\{(X_i,Y_i)\}^n_{i=1}$ from a $p+q$ dimensional multivariate Gaussian where $X_i \in \R^p$ and $Y_i \in\R^q$, determine whether $X \perp Y$ or not.
\end{quote}

\noindent
Some scientific applications include\\

\begin{itemize}
\item (Medicine) Determine if the injection of a particular vaccine (say MMR) and incidence of a disease (say autism) are independent.\\

\item (Neuroscience) Determine if some feature of a particular stimulus (say color) is independent of the activity in a particular part of the brain (say auditory cortex).\\
\end{itemize}

Naturally, both the above examples and many others are also related to \textit{conditional} independence testing, learning graphical models and inferring causality (does smoking cause cancer? is crime independent of educational status given economic status?). However, before moving onto more complicated problem settings, it is of interest to properly understand the hardness of the aforementioned fundamental problem.
That is precisely the contribution of this paper. We establish minimax lower bounds for linear independence testing, in the high dimensional regime (but without any sparsity assumptions). Our main result can be (morally) summarized as follows:
\begin{quote}
$n$ must be at least as large as $\sqrt{pq}/\|\Sigma_{XY}\|_F^2$ for any procedure (test) to have non-trivial statistical power as $p,q,n \to \infty$ with $(p+q)/n \leq \kappa < \infty$.
\end{quote}

\paragraph{Paper Outline.} In Section \ref{sec:back}, we formally define the problem. In Section \ref{sec:thm}, we state and prove our main theorem. In Section \ref{sec:upper}, we provide some evidence of the tightness of these bounds. We then conclude with some open problems.

\paragraph{Notation}
Let $\N_d(\mu,\Sigma)$ refer to a $d$-variate Gaussian distribution with mean $\mu \in \R^d$ and $d \times d $ positive definite covariance matrix $\Sigma$. We shall also use $\N_d(z; \mu,\Sigma)$ to denote the corresponding Gaussian pdf  at a point $z$ which is given by $(2\pi)^{-d/2} \mathrm{det}(\Sigma)^{-1/2} \exp(-\tfrac1{2} (z-\mu)^T \Sigma (z-\mu))$. $\| \cdot \|$ refers to the standard Euclidean 2-norm. $\R$ denotes the reals, $\E$ denotes expectation (usually with a subscript reflecting the data distribution).


\section{Problem Definition }\label{sec:back}


The problem of linear independence testing can be defined as follows. Given $n$ i.i.d. points $(X_1,Y_1),...,(X_n,Y_n) \in \R^{p+q}$ drawn from a joint 
distribution $\dF$
with covariance matrix $\Sigma$, we wish to ascertain whether $X$ is independent of $Y$ or not. Formally, denoting $\dP,\dQ$ as the marginal distributions of $X \in \R^p$ and $Y \in \R^q$ respectively, this can be formulated as testing if the joint distribution is the product of the marginals or not:
$$
H_0 : \dF = \dP \times \dQ \text{ vs. } H_1 : \dF \neq \dP \times \dQ
$$
Denote $\Sigma_X \in \R^{p \times p}, \Sigma_Y \in \R^{q\times q}$ as the (unknown) population covariance matrices of $X,Y$ respectively (which we assume are full rank and hence positive definite and invertible), and $\Sigma_{XY} \in \R^{p\times q}$ as the cross-covariance matrix of $X,Y$. Then $\Sigma = \left[\begin{matrix} \Sigma_X & \Sigma_{XY} \\ \Sigma_{XY}^T & \Sigma_Y \end{matrix} \right]$ and $\dP \times \dQ$ has the distribution $\N_{p+q} \left(0, \left[\begin{matrix} \Sigma_X & 0 \\ 0 & \Sigma_Y \end{matrix}\right] \right)$. When one is dealing with Gaussian distributions, the problem actually boils down to testing
  if the cross-covariance matrix is zero or not:
$$
H_0 : \Sigma_{XY} = 0 \text{ vs. } H_1 : \Sigma_{XY} \neq 0
$$
\begin{remark}
In some communities, $H_1$ would be stated as $\|\Sigma_{XY}\|_F > \epsilon_n$. The critical value of $\epsilon_n$ that makes the problem hard or easy becomes clear in the main theorem statement.
\end{remark}

\begin{remark}
When $X,Y$ are univariate, one often looks at the correlation coefficient between them. Here too, one can look at an appropriate correlation matrix $R_{XY} = \Sigma_X^{-1/2} \Sigma_{XY} \Sigma_Y^{-1/2}$ and test whether that is zero or not (notice that $\Sigma_{XY} = 0$ iff $R_{XY}=0$, assuming population covariance matrices are invertible). In this paper, we will deal with $\Sigma_{XY}$, but one can imagine doing similar calculations, with morally similar conclusions, for $R_{XY}$ also.
\end{remark}

 A test  $\eta $ is a function from $(X_1,Y_1),...,(X_n,Y_n)$ to $\{0,1\}$, where we reject $H_0$ whenever $\eta=1$. A test is judged by two metrics, its false positive rate $\E_{H_0} \eta$ and its power $\E_{H_1} \eta$. Naturally, that we would like to minimize the former and maximize the latter (we would like $\eta$ to return 0 whenever $H_0$ is true, and return 1 whenever $H_1$ is true).
 In the Neyman-Pearson  paradigm, 
 we only consider tests  that have a type-I error of at most a prespecified level $\alpha$. Let us call the set of all such tests as
\begin{equation}\label{eq:tests}
[\eta]_{n,p,q,\alpha} := \{\eta : \R^{n\times (p+q)} \to \{0,1\}, \E_{H_0} \eta \leq \alpha  \}.
\end{equation}
A test is judged by its power 
$\E_{H_1} \eta$ (one minus type-II error), 
and we say that a test $\eta \in [\eta]_{n,p,q,\alpha}$ is consistent in the high-dimensional setting when its power 
$\E_{H_1} \eta$ satisfies
$$
\E_{H_1}\eta \rightarrow 1, \E_{H_0} \eta \leq \alpha \mbox{ as } (n,p+q) \rightarrow \infty, \text{ for any fixed $\alpha > 0$}
$$
where one also needs to specify the relative rate at which $n,p,q$ can increase. In the following section, we will show that when $(p+q)/n \leq \kappa <\infty$, if $n$ does not grow faster than (a constant factor times) $\sqrt{pq}/\|\Sigma_{XY}\|_F^2$  then it cannot possibly be consistent.

\section{Main Theorem}\label{sec:thm}

Define the following set of covariance matrices, encoding a set of Gaussians:
$$\Theta(b) = \left\{\Sigma : \|\Sigma_{XY}\|_F = b\frac{\sqrt[4]{pq}}{\sqrt{2n}}\right\}.$$

\begin{theorem}
Let $0 < \alpha < \beta < 1$.  Suppose that as $n \rightarrow \infty, (p + q) \rightarrow \infty$ and that $(p + q) / n \leq \kappa$ for some constant $\kappa < \infty$ and all $n$.  Then there exists a constant $b = b(\kappa, \beta - \alpha) < 1$ such that for any test $\eta \in [\eta]_{n,p,q,\alpha}$  for testing $H_0: \Sigma_{XY} = 0$  vs  $H_1: \Sigma_{XY} \in \Theta(b)$,
$$
\limsup_{n\rightarrow\infty} \inf_{\Sigma\in\Theta(b)} \E_{H_1} \eta < \beta
$$
\end{theorem}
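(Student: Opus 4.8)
The plan is to use Le Cam's two-point / mixture method: reduce the composite alternative to a single effective distribution by placing a prior over $\Theta(b)$, and show that the resulting mixture is statistically indistinguishable from $H_0$ in total variation (equivalently, in $\chi^2$-divergence). Concretely, under $H_0$ the data are $n$ i.i.d. draws from $\N_{p+q}(0, \diag(\Sigma_X, \Sigma_Y))$; after whitening $X$ and $Y$ separately (which is a sufficiency-preserving bijection and does not change the testing problem), we may assume $\Sigma_X = I_p$, $\Sigma_Y = I_q$, so $H_0$ is standard Gaussian noise and $H_1$ perturbs only the off-diagonal block $\Sigma_{XY}$, constrained to $\|\Sigma_{XY}\|_F = b\,(pq)^{1/4}/\sqrt{2n}$. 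The natural prior draws $\Sigma_{XY}$ with i.i.d. (appropriately scaled, sign-symmetric) entries — e.g.\ $\pm\gamma$ Rademacher with $\gamma = b/( \sqrt{2n}\,(pq)^{1/4})$ so that the Frobenius norm lands on the sphere $\Theta(b)$ — while leaving the diagonal blocks fixed; one should check this prior is (essentially) supported on positive-definite $\Sigma$, which holds with high probability since the spectral norm of such a $\Sigma_{XY}$ is $O(\gamma(\sqrt p+\sqrt q)) = O(b \sqrt{(p+q)/n}\cdot (pq)^{-1/4}\cdot\text{stuff}) \to 0$, comfortably below $1$; a small truncation argument handles the measure-zero bad event.

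The key computation is to bound the second moment $\E_{H_0}[L_n^2]$, where $L_n = \frac{d\bar{\dQ}_n}{d\dP_n^{\otimes}}$ is the likelihood ratio between the mixture alternative $\bar{\dQ}_n$ and the product-null $\dP^{\otimes n}$. Using the standard identity $\E_{H_0}[L_n^2] = \E_{\Sigma,\Sigma'}\big[\big(\int \frac{d\dQ_\Sigma}{d\dP}\frac{d\dQ_{\Sigma'}}{d\dP}\,d\dP\big)^n\big]$ where $\Sigma,\Sigma'$ are independent prior draws, and the Gaussian-integral formula for the pairwise "overlap" $\int \N(z;0,\Sigma)\N(z;0,\Sigma')/\N(z;0,\Sigma_0)\,dz$, the per-sample overlap equals $\det\!\big(I - \Sigma_{XY}^T \Sigma_{XY}' \big)^{-1/2}$ (after whitening), which for small perturbations is $\exp\big(\tfrac12 \tr(\Sigma_{XY}^T\Sigma_{XY}' \cdot (\cdots)) + \text{h.o.t.}\big) \approx \exp\big(\tfrac12 \langle \Sigma_{XY}, \Sigma_{XY}'\rangle_F^{(2)}\big)$. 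Raising to the $n$-th power and taking the prior expectation, $\langle \Sigma_{XY},\Sigma_{XY}'\rangle$ is a sum of $pq$ independent mean-zero terms each of size $\gamma^2$, so $n \cdot \langle\cdot,\cdot\rangle$ concentrates and its moment generating function is controlled by $n^2\gamma^4 pq = n^2 \cdot \frac{b^4}{4 n^2 pq}\cdot pq = b^4/4$. Thus $\E_{H_0}[L_n^2] \le \exp(c\,b^4) $ (plus vanishing corrections from higher-order terms in the $\det$ expansion, which is where the hypothesis $(p+q)/n \le \kappa$ enters to keep the expansion valid uniformly). Choosing $b$ small enough that $\chi^2 = \E_{H_0}[L_n^2] - 1 \le 2(\beta-\alpha)^2$ gives, via the Le Cam / Ingster bound $\inf_\eta \sup \E_{H_1}\eta \le \alpha + \sqrt{\chi^2/2}$ applied to $\bar{\dQ}_n$ and hence to the infimum over $\Theta(b)$ (since the mixture's power lower-bounds $\inf_{\Sigma\in\Theta(b)}\E_{H_1}\eta$ up to the standard averaging argument), the claimed bound $\limsup_n \inf_{\Sigma\in\Theta(b)}\E_{H_1}\eta < \beta$.

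The main obstacle is controlling the $\det(I - \Sigma_{XY}^T\Sigma_{XY}')^{-1/2}$ term \emph{uniformly} and honestly rather than heuristically: one must show the logarithm of the per-sample overlap is $\tfrac12\langle\Sigma_{XY},\Sigma_{XY}'\rangle_F + O(\text{quartic})$ with the quartic remainder, after multiplying by $n$ and exponentiating and taking the prior MGF, still bounded by a function of $b$ and $\kappa$ alone — this requires tracking that the relevant operator norms are $o(1)$ (which uses $(p+q)/n\le\kappa$ and the specific scaling of $\gamma$) and that $n\gamma^2\sqrt{pq}$, the size of the cross terms' cumulants, stays $O(b^2)$. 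A secondary but routine nuisance is the positive-definiteness/truncation of the prior and verifying that the truncated mixture still lies (in the limit, or after a negligible correction) within $\Theta(b)$ or can be compared to it; this is handled by standard arguments showing the truncation changes total variation by $o(1)$.
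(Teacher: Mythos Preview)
Your proposal is correct in strategy and matches the paper's approach closely: both use Ingster's mixture/$\chi^2$ method, place a sign-symmetric prior on the entries of $\Sigma_{XY}$ at scale $\gamma = b/(\sqrt{2n}\,(pq)^{1/4})$ with $\Sigma_X=I_p,\Sigma_Y=I_q$, and bound $\E_{H_0}[L_n^2]-1$ via the moment generating function of the cross-term $\langle \Sigma_{XY},\Sigma'_{XY}\rangle_F$.

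The one substantive difference is the \emph{structure} of the prior. You take the $pq$ entries of $\Sigma_{XY}$ i.i.d.\ Rademacher; the paper instead takes the rank-one choice $\Sigma_{XY}=\gamma\,uv^T$ with $u\in\{\pm1\}^p$, $v\in\{\pm1\}^q$ (so only $p+q$ random bits, not $pq$). This rank-one structure is what lets the paper bypass exactly the obstacle you flag: via Sherman--Morrison and the determinant identity, the per-sample overlap collapses to a closed form, and the full $n$-sample $\chi^2$ becomes \emph{exactly} $\E_{U,V}\big[(1-\gamma^2 UV)^{-n}\big]$ with $U=u^\top g$, $V=v^\top h$ sums of Rademachers. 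No Taylor expansion of $\log\det$, no higher-order remainder control, and no truncation for positive-definiteness (the rank-one perturbation has operator norm $\gamma\sqrt{pq}\le b\sqrt{\kappa}/2<1$ deterministically). From there the paper finishes with the elementary bound $(1-x)^{-1/x}\le 4$ for $x\le 1/2$ and Hoeffding on $U,V$.

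Two small computational slips in your sketch: the Gaussian overlap integral is $\det(I-\Sigma_{XY}^\top\Sigma'_{XY})^{-1}$ (not $-1/2$) in the rank-one case, and correspondingly the leading term of the log-overlap is $\langle\Sigma_{XY},\Sigma'_{XY}\rangle_F$, not half of it; this only affects constants. Your i.i.d.\ prior would also work, but you would genuinely have to carry out the remainder/MGF argument you describe, whereas the paper's rank-one prior makes that work unnecessary.
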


\begin{remark}
This theorem immediately implies the same bound holds for general linear independence testing (no algorithm can guarantee consistency with fewer samples, since that algorithm must definitely also work on our Gaussians, contradicting our lower bound).
\end{remark}

As mentioned before, one way to interpret the above theorem, is that if $n$ \textit{equals} a small constant times $\sqrt{pq}/\|\Sigma_{XY}\|_F^2$, then any test's minimax power must be bounded away from 1 (and if we get to control the constant, we can force the minimax power to be as close to $\alpha$ as we like). For consistency of any algorithm to be possible, $n$ needs to grow \textit{larger} than the above rate. We now turn to the proof of this theorem. For conciseness, we use $Z$ to refer to $(X,Y)$ pairs. We broadly follow the proof strategy of \cite{cai2013optimal}.

\begin{proof}
We define some ``least favorable'' subset of $\Theta(b)$ as follows:
$$
\Theta^*(b) = \left\{\Sigma_{uv} = [I_{(p+q) \times (p+q)}] + \frac{b}{\sqrt{2 n}\sqrt[4]{pq}}(uv' + vu')\right\}
$$
where $u = (\{\pm1\}^p,\{0\}^q)$, $v = (\{0\}^p,\{\pm1\}^q)$, making the size of the set equal to $2^{p+q}$. Note that the off-diagonal values of $\Sigma_{XX}, \Sigma_{YY}$ are $0$ and every element of the cross-covariance $\Sigma_{XY}$ is $\pm \frac{b}{\sqrt{2 n}\sqrt[4]{pq}}$. It is easy to verify that $\Theta^*(b) \subset \Theta(b)$.
\footnote{For $b=b(\kappa) < \frac1{\sqrt{\kappa}} \leq \sqrt{\frac{n}{p+q}}$, since $\sqrt{pq}\leq \frac{p+q}{2}$, we have $\|\Sigma_{XY}\|_F < \sqrt{\frac{n}{p+q}} \frac{\sqrt{p+q}}{\sqrt{4n}} = \frac1{2}$. Hence, writing $\Sigma$ as $ I + A$ for perturbation $A$, we know $\|A\|_{op}  = \|\Sigma - I\|_{op} \leq \|\Sigma - I\|_F < 1$. Since the eigenvalues of $\Sigma$ cannot be more than $\|A\|_{op} < 1$ away from those of  $I$, $\Sigma$ is a symmetric positive definite covariance matrix.}.

 Let $P_0$ be the probability measure when $Z_1,\dots,Z_n \sim \N_{p+q}(0,I)$, i.e. $\Sigma_{XY}=0$.  Let $P_{uv}$ be the probability measure when $Z_1,\dots,Z_n \sim \N_{p+q}(0,\Sigma_{uv})$ for some $\Sigma_{uv} \in \Theta^*(b)$. For any test, we require $P_0(\eta \text{ rejects } H_0) = \E_0\eta \leq \alpha$, and its power is $P_{uv}(\eta \text{ rejects } H_0) = \E_{uv}\eta$.  Let $P_1 = \frac{1}{2^{p+q}} \sum_{u,v}P_{uv}$ be the average probability measure of the possible $P_{uv}$'s, and denote expectation w.r.t $P_1$ as $\E_1$.  Then for any test $\eta$
\begin{eqnarray*}
\inf_{\Sigma\in\Theta(b)} \E_{H_1} \eta 
&\leq & \inf_{\Sigma_{uv} \in \Theta^*(b)} \E_{uv}\eta \\
& \leq & \frac{1}{2^{p+q}}\sum_{u,v} \E_{uv} \eta  \ =  \ \E_1 \eta \\
& \leq & \alpha + \E_1 \eta - \E_0 \eta \\
& \leq & \alpha + \sup_\eta | \E_1 \eta - \E_0 \eta | \\
& = &  \alpha + \frac{1}{2}\|P_1 - P_0\|_1
\end{eqnarray*}
The last equality follows because $ \sup_\eta | \E_1 \eta - \E_0 \eta |$ is precisely the total variation distance.
To control the rightmost side, we bound the $L_1$ distance by the chi-square divergence as
$$
\|P_1-P_0\|_1^2\leq \E_0 \left|\frac{dP_1}{dP_0}-1\right|^2 = \E_0\left|\frac{dP_1}{dP_0}\right|^2 - 1 = \int\frac{f_1^2}{f_0} - 1
$$

To prove the theorem, we will now perform detailed calculations to show that for an appropriate $b = b(\kappa,\beta-\alpha)$, we have
$$\int\frac{f_1^2}{f_0} - 1 \leq 4(\beta - \alpha)^2.$$


Noting that $\text{diag}(\Sigma_{uv}) = (1,\dots,1)$ and denoting $a=\frac{b}{\sqrt{2n}\sqrt[4]{pq}}$, the Sherman-Morrison formula gives us
$$
\Sigma_{uv}^{-1} = I - \frac{a(vu' + uv' - a(pvv' + quu'))}{1-pqa^2},
$$
and combining the Schur complement formula with Sylvester's determinant theorem, we see
$$
\text{det}(\Sigma_{uv}) = 1 - pqa^2.
$$
So, the density functions are as follows:
$$
f_0(z_1,\dots,z_n) = \frac{1}{(2\pi)^{n(p+q)/2}}\exp\left\{-\frac{1}{2}\sum_{i=1}^nz_i'z_i\right\}
$$
\begin{align*}
f_1(z_1,\dots,z_n) &= \frac{1}{2^{p+q}}\sum_{u,v}\frac{1}{(2\pi)^{n(p+q)/2}(\text{det}(\Sigma_{uv}))^{n/2}}\exp\left\{-\frac{1}{2}\sum_{i=1}^nz_i'\Sigma_{uv}^{-1}z_i\right\}\\
&=\frac{1}{(2\pi)^{n(p+q)/2}}\exp\left\{-\frac{1}{2}\sum_{i=1}^nz_i'z_i\right\}\frac{1}{(1-pqa^2)^{n/2}}\\
&\times \frac{1}{2^{p+q}}\sum_{u,v}\exp\left\{\frac{a}{2(1-pqa^2)}\sum_{i=1}^n 2(v'z_i)(u'z_i) - qa(u'z_i)^2 - pa(v'z_i)^2\right\}
\end{align*}
Therefore, substituting $t = \frac{a}{(1-pqa^2)} $, and $T(u,v,z) =  2(v'z)(u'z) -qa(u'z)^2 - pa(v'z)^2$
\begin{align*}
\frac{f_1^2}{f_0} &= \frac{1}{(1-pqa^2)^n}\frac{\exp\left\{-\frac{1}{2}\sum_{i=1}^nz_i'z_i\right\}}{(2\pi)^{n(p+q)/2}}
\frac{1}{2^{2(p+q)}}\biggl(\sum_{u,v}\exp\biggl[\frac{t}{2}\sum_{i=1}^n T(u,v,z_i) \biggr]\biggr)^2
\end{align*}

We can write the squared summation as a product by denoting $g \in (\{\pm1\}^p,\{0\}^q)$ and $h \in (\{0\}^p,\{\pm1\}^q)$ independent from $(u,v)$:
\begin{align*}
\frac{f_1^2}{f_0} &= \frac{1}{(1-pqa^2)^n}\frac{\exp\left\{-\frac{1}{2}\sum_{i=1}^nz_i'z_i\right\}}{(2\pi)^{n(p+q)/2}}
\frac{1}{2^{2(p+q)}}\biggl(\sum_{u,v}\sum_{g,h}\exp\biggl\{\frac{t}{2}\sum_{i=1}^n\biggl( T(u,v,z_i) + T(g,h,z_i) \biggr)\biggr\}\biggr)
\end{align*}
Hence, integrating w.r.t. to $z_1^n = (z_1,...,z_n)$, switching integrals and sums, we see that
\begin{align*}
\int \frac{f_1^2}{f_0} dz_1^n &= \frac{1}{(1-pqa^2)^n} \frac{1}{2^{2(p+q)}} \sum_{u,v,g,h} \int \frac{\exp\left\{-\frac{1}{2}\sum_{i=1}^nz_i'z_i\right\}}{(2\pi)^{(p+q)/2}} \exp\biggl[\frac{t}{2}\sum_{i=1}^n\biggl( T(u,v,z_i) + T(g,h,z_i) \biggr)\biggr] dz_1^n \\
&= \frac{1}{(1-pqa^2)^n} \frac{1}{2^{2(p+q)}} \sum_{u,v,g,h} \left[ \int \frac{\exp\left\{-\frac{1}{2}z'z\right\}}{(2\pi)^{(p+q)/2}} \exp\biggl\{\frac{t}{2}\biggl( T(u,v,z) + T(g,h,z) \biggr)\biggr\} dz \right]^n
\end{align*}

The last summand is interpretable as the $n$th power of the MGF of $ (T(u,v,z) + T(g,h,z)) /2 $ when $z$ is standard Gaussian. Denoting $\chi = (\chi_1,\chi_2,\chi_3,\chi_4)=(u'z,v'z,g'z,h'z)$, it is easy to algebraically verify that $T(u,v,z) + T(g,h,z ) = \chi^T A \chi $, where
\begin{align*}
A &=
\left[ \begin{array}{cccc}
-qa&1&0&0\\
1&-pa&0&0\\
0&0&-qa&1\\
0&0&1&-pa\\
\end{array} \right].
\end{align*}

Hence, we must now calculate the MGF of $\chi^T A \chi/2$ where, since $z$ is standard Gaussian,
\begin{align}
\chi &= 
\left[ \begin{array}{c}
\chi_1\\
\chi_2\\
\chi_3\\
\chi_4\\
\end{array} \right] \sim 
\N_4\left(\left[\begin{array}{c}
0\\
0\\
0\\
0
\end{array} \right],
\left[ \begin{array}{cccc}
p&0&u'g&0\\
0&q&0&v'h\\
u'g&0&p&0\\
0&v'h&0&q\\
\end{array} \right]\right).
\end{align}

 \cite[pg 29]{mathai1992quadratic} shows how to decompose arbitrary quadratic forms of dependent Gaussians into sums of potentially noncentral, \textit{but independent}, chi-squared random variables. Denoting $\Sigma$ as the aforementioned covariance of $\chi$, we have $\chi^TA\chi = \gamma_{00} \Psi_{00} + \gamma_{01} \Psi_{01} + \gamma_{10} \Psi_{10} + \gamma_{11} \Psi_{11}$ for iid $\Psi_{ij} \sim \chi^2_1$ where the $\gamma_{ij}$ are the eigenvalues of $\Sigma^{^1/_2}A\Sigma^{^1/_2}$,  written explicitly as:
\begin{eqnarray*}
\gamma_{ij} &=& \frac{1}{2}\biggl(-2apq + (-1)^i aq(u'g) + (-1)^i ap (v'h)  - (-1)^j \sqrt{R}\biggr)\\
\text{where ~} \ R &=& 4pq- (-1)^i 4q(u'g) + a^2q^2(u'g)^2 - (-1)^i 4p(v'h) \\
&& +4(u'g) (v'h)-2a^2pq(u'g) (v'h)+a^2p^2(v'h)^2
\end{eqnarray*}

Hence, resubstituting, we see that
\begin{align}
\int \frac{f_1^2}{f_0}  &= \frac{1}{(1-pqa^2)^n}\frac{1}{2^{2(p+q)}} \sum_{u,v,g,h} \mathbb{E}^n\left [ \exp\left\{\frac{t}{2}\gamma_{00}Y_{00}+\frac{t}{2}\gamma_{01}Y_{01}+ \frac{t}{2}\gamma_{10} Y_{10} +\frac{t}{2}\gamma_{11}Y_{11}\right\} \right] \nonumber \\
&= \frac{1}{(1-pqa^2)^n}\frac{1}{2^{2(p+q)}} \sum_{u,v,g,h} (1-t\gamma_{00})^{-\frac{n}{2}} (1-t\gamma_{01})^{-\frac{n}{2}} (1-t\gamma_{10})^{-\frac{n}{2}} (1-t\gamma_{11})^{-\frac{n}{2}}\\
&=\frac{1}{(1-pqa^2)^n} \E_{u,v,g,h} \left[ \left[(1-t\gamma_{00})(1-t\gamma_{01})(1-t\gamma_{10})(1-t\gamma_{11})\right]^{-\frac{n}{2}} \right]
\end{align}
where we used the chi-squared MGF $\E[\exp(t\gamma_{ij} Y_{ij}/2)] = (1-t\gamma_{ij})^{-1/2}$ which only holds if $t\gamma_{ij}/2 < 1/2$. We verify in Prop.\ref{prop:MGF} that this indeed holds for $b = b(\kappa) < \frac1{2\sqrt{\kappa}}$.

Note that $u'g$ is distributed identically to $u'\mathbf{1}$ when $u,g$ are i.i.d. Rademacher random variables, and likewise for $v'h$.  So we can safely replace all such instances with the scalars $U = u'\mathbf{1}$ and $V = v'\mathbf{1}$ respectively.
By substitution for $\gamma_{ij}$, the above simplifies to
\begin{align*}
\int \frac{f_1^2}{f_0} 
&=\frac{1}{(1-pqa^2)^n} \E_{u,v,g,h}\left[\left[\left(\frac{-1 + a^2(u'g)(v'h)}{-1 + a^2pq}\right)^{2}\right]^\frac{-n}{2} \right]\\
&=\E_{u,v,g,h}\left[(1-a^2(u'g)(v'h))^{-n}\right] =\E_{U,V}\left[(1-a^2UV)^{-n}\right]\\
&= \E_{U,V} \left[ \left[(1-a^2UV)^\frac{-1}{a^2UV}\right]^{na^2UV} \right]\\
&\leq \mathbb{E}_{U,V}\left[\exp\{na^2UV\log{4}\}\right]
\end{align*}
which is true because\footnote{Note that $|a^2UV| = |\frac{b^2}{2n \sqrt{pq}} UV| \leq |b^2 \frac{\sqrt{pq}}{2n}| < |\frac{b^2\kappa}{4}| < \frac{1}{2}$ if $b = b(\kappa) < \frac1{\sqrt \kappa}$, since $|U|\leq p,|V|\leq q$.} $x=a^2UV \leq \frac{1}{2}$ if $b = b(\kappa) < \frac1{\sqrt \kappa}$ and also\footnote{To show this, we can rewrite as $-\frac{1}{x}\log (1-x) \leq 2\log2$.  The only discontinuity on $x \leq \frac{1}{2}$ is at $x = 0$.  By L'H\^{o}pital's rule, 
$\lim_{x\rightarrow 0} -\frac{1}{x}\log (1-x) = \lim_{x\rightarrow 0} \frac{1}{1-x} = 1$, establishing continuity at $0$.
The derivative of $-\frac{1}{x}\log (1-x)$ 
is $\left(\frac{x}{1-x} + \log{(1-x)}\right)/x^2$. Since $\log{(1-x)} > -\frac{x}{1-x}$ for $x<1$, due to the Taylor expansion of $(1-x)\log{(1-x)}$, the aforementioned derivative is positive on $x \leq \frac{1}{2}$.  Therefore, $-\frac{1}{x}\log (1-x)$ is strictly increasing on $x \leq \frac{1}{2}$ and achieves value $2\log2$ at $x = \frac{1}{2}$.  Hence we can conclude that, as $-\frac{1}{x}\log (1-x) \leq 2\log 2$ on the desired interval, then $(1-x)^\frac{-1}{x} \leq 4$ on $x \leq \frac{1}{2}$.}
$(1-x)^\frac{-1}{x} \leq 4$ for $x \leq \frac{1}{2}$.

 Recalling that $a = \frac{b}{\sqrt{2n}\sqrt[4]{pq}}$
\begin{align}
\int \frac{f_1^2}{f_0}
&\leq \mathbb{E}_{U,V}\left[\exp\left\{n\frac{b^2\sqrt{pq}}{2npq}UV\log{4}\right\}\right]
\leq \mathbb{E}_{U,V}\left[\exp\left\{\frac{b^2}{\sqrt{pq}}|UV|\log{2}\right\}\right] \nonumber\\
&=\int^{\infty}_0 P\left(\exp\left\{\frac{b^2}{\sqrt{pq}}|UV|\log{2}\right\} \geq \mu\right)d\mu \nonumber\\
&\leq1+\int^{\infty}_1P\left(\exp\left\{\frac{b^2}{\sqrt{pq}}|UV|\log{2}\right\} \geq \mu\right)d\mu \nonumber\\
&=1+\int^{\infty}_1P\left(|UV| \geq \frac{\log{\mu}}{\log{2}}\frac{\sqrt{pq}}{b^2}\right)d\mu \nonumber\\
&\leq 1+\int^{\infty}_12e^{\frac{-p\log{\mu}}{pb^22\log{2}}}+2e^{\frac{-q\log{\mu}}{qb^22\log{2}}}d\mu\label{eq:hoeff-sub} \\
&=1+\int^{\infty}_14\exp\left\{-\frac{\log{\mu}}{\log{4}}\frac{1}{b^2}\right\}d\mu =1+\int^{\infty}_14\mu^{\frac{-1}{b^2\log{4}}}d\mu \nonumber\\
&=1+4\frac{b^2\log{4}}{1-b^2\log{4}} \nonumber
\end{align}
The last equality holds when $b< 1/\sqrt{\log 4}$ and Eq.\eqref{eq:hoeff-sub} follows by Hoeffding's inequality\footnote{Setting $\sqrt{C_1} = \sqrt{\frac{p\log{\mu}}{b^2\log{2}}}$ and $\sqrt{C_2} = \sqrt{\frac{q\log{\mu}}{b^2\log{2}}}$ so that $\sqrt{C_1C_2} = \frac{\log{\mu}}{\log{2}}\frac{\sqrt{pq}}{b^2}$, Hoeffding's inequality implies that $P(|U| \leq \sqrt{C_1}) \geq 1-2e^{\frac{-C_1}{2p}}$ and
$P(|V| \leq \sqrt{C_2}) \geq 1-2e^{\frac{-C_2}{2q}} $, and because $U,V$ are independent,
\begin{align}
P(|UV| \geq \sqrt{C_1C_2}) &\leq 1 - P\left(|U| \leq \sqrt{C_1} \;\ \text{and} \;|V| \leq \sqrt{C_2} \right) \nonumber\\
&\leq1 - (1-2e^{\frac{-C_1}{2p}})(1-2e^{\frac{-C_2}{2q}}) 
\leq 2e^{\frac{-C_1}{2p}}+2e^{\frac{-C_2}{2q}} \nonumber
\end{align}
}. Finally, observe that choosing any  $b \leq \frac{(\beta - \alpha)}{\sqrt{\log 4}{(1 + \beta - \alpha)}}$ guarantees
$$\int\frac{f_1^2}{f_0} - 1 \leq  4(\beta-\alpha)^2.$$

This concludes the proof of our main theorem, except for Prop. \ref{prop:MGF} which follows.
\end{proof}

\begin{proposition}\label{prop:MGF}
For $t$ and $\gamma_{ij}$ as defined, we have $t \gamma_{ij}/2 < 1/2$ whenever $b < 1/2\sqrt{\kappa}$.
\end{proposition}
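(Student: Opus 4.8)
The plan is to bound each eigenvalue $|\gamma_{ij}|$ by a quantity of order $apq+\sqrt{pq}$, and then use the fact that $pqa^2$ appears both in the bound on $\gamma_{ij}$ and in $t=a/(1-pqa^2)$, so that a potentially large common factor cancels and everything reduces to the elementary estimate $a\sqrt{pq}\le b\sqrt\kappa/2$. It is worth noting why the crude inequality $|\gamma_{ij}|\le\|\Sigma^{1/2}A\Sigma^{1/2}\|_{op}\le\|\Sigma\|_{op}\|A\|_{op}$ will not suffice: the $4\times4$ covariance $\Sigma$ has eigenvalues $p\pm u'g$ and $q\pm v'h$, so $\|\Sigma\|_{op}$ can be as large as $2\max(p,q)$, and then $t\|\Sigma\|_{op}\|A\|_{op}$ carries a term of size $\approx 2a^2\max(p,q)^2$, which is \emph{not} controlled by the $\sqrt[4]{pq}$ in the denominator of $a$ when $p,q$ are very unbalanced (e.g.\ $p=1$, $q=\Theta(n)$, where this term is $\Theta(\sqrt n)$). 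So one must exploit the explicit structure.

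First I would record the consequences of $b<\tfrac1{2\sqrt\kappa}$ that are needed. Since $a=b/(\sqrt{2n}\sqrt[4]{pq})$ and, by AM--GM, $(pq)^{1/4}\le\sqrt{(p+q)/2}$, we get $a\sqrt{pq}=b(pq)^{1/4}/\sqrt{2n}\le\tfrac b2\sqrt{(p+q)/n}\le\tfrac b2\sqrt\kappa<\tfrac14$; in particular $pqa^2=(a\sqrt{pq})^2<\tfrac1{16}<1$, so $t=a/(1-pqa^2)>0$ is well defined. Next I would bound $|\gamma_{ij}|$ from the explicit formula $\gamma_{ij}=\tfrac12\bigl(-2apq+(-1)^iaq(u'g)+(-1)^iap(v'h)-(-1)^j\sqrt R\bigr)$ stated in the proof. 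Using $|u'g|\le p$ and $|v'h|\le q$, the triangle inequality gives $\bigl|-2apq+(-1)^iaq(u'g)+(-1)^iap(v'h)\bigr|\le 4apq$; and, estimating the seven terms of $R$ one at a time, $0\le R\le 16pq+4a^2p^2q^2$ (nonnegativity is automatic since $\gamma_{ij}$ is real, being an eigenvalue of the symmetric matrix $\Sigma^{1/2}A\Sigma^{1/2}$), whence $\sqrt R\le 4\sqrt{pq}+2apq$. Therefore $|\gamma_{ij}|\le 3apq+2\sqrt{pq}=\sqrt{pq}\,(3a\sqrt{pq}+2)$. (A slightly sharper constant comes from noting that $\Sigma A$, which has the same eigenvalues as $\Sigma^{1/2}A\Sigma^{1/2}$, has the block form $\left[\begin{smallmatrix}M_1&M_2\\M_2&M_1\end{smallmatrix}\right]$ for explicit $2\times2$ blocks, so its spectrum is that of $M_1\pm M_2$; each of these is $\left[\begin{smallmatrix}-\tilde pqa&\tilde p\\\tilde q&-\tilde qpa\end{smallmatrix}\right]$ with $\tilde p\in[0,2p]$, $\tilde q\in[0,2q]$ and negative determinant $\tilde p\tilde q(pqa^2-1)$, giving $|\gamma_{ij}|\le a\max(\tilde pq,\tilde qp)+\sqrt{\tilde p\tilde q}\le 2apq+2\sqrt{pq}$; either bound suffices below.)

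Finally I would combine these. Writing $x:=a\sqrt{pq}$, so $pqa^2=x^2$, we get $t\gamma_{ij}/2\le t|\gamma_{ij}|/2\le\dfrac{a\sqrt{pq}\,(3a\sqrt{pq}+2)}{2\,(1-(a\sqrt{pq})^2)}=\dfrac{x(3x+2)}{2(1-x^2)}$, which is increasing on $[0,1)$ (increasing positive numerator over decreasing positive denominator); plugging in $x<\tfrac14$, valid by the first step, yields $t\gamma_{ij}/2<\dfrac{(1/4)(11/4)}{2(15/16)}=\dfrac{11}{30}<\dfrac12$, as required.

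The main obstacle is seeing and carrying out the cancellation: one must recognize that the natural scale of $\gamma_{ij}$ is $\sqrt{pq}$ and $apq$, not $\max(p,q)$ and $a\max(p,q)^2$ as generic norm bounds would give, and that the $a^2pq$ contribution to $\gamma_{ij}$ divides out against the $1-pqa^2$ in the denominator of $t$, leaving only $a\sqrt{pq}\le b\sqrt\kappa/2$ to estimate. Once that structure is set up, the term-by-term bound on $R$ and the final one-variable inequality are routine.
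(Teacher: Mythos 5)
Your proof is correct and follows essentially the same route as the paper: bound each term of $\gamma_{ij}$ via $|u'g|\le p$, $|v'h|\le q$ to get $R\le 16pq+4a^2p^2q^2$, establish $a\sqrt{pq}<1/4$ from $b<1/(2\sqrt\kappa)$ and AM--GM, and then observe that $t\gamma_{ij}/2$ reduces to a bounded function of $x=a\sqrt{pq}$ after the $1-pqa^2$ in $t$ absorbs the $a^2pq$ scale. Your split $\sqrt{16pq+4a^2p^2q^2}\le 4\sqrt{pq}+2apq$ loosens the constant slightly relative to the paper's direct numerical substitution, but the argument and final conclusion are the same.
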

\begin{proof}
Note that $\max_{u,g}|u'g|=p$, $\max_{v,h}|v'h|=q$.
By taking every term in each $\gamma_{ij}$ to be maximal and positive, we see that $\gamma_{ij} \leq \frac{1}{2}\left(4apq+\sqrt{16pq+4a^2p^2q^2}\right)$.
Since $2\sqrt{pq} \leq (p+q)$, recall that $a= \frac{b}{\sqrt{2n}\sqrt[4]{pq}} < \frac1{2} \sqrt{\frac{n}{p+q}} \frac1{\sqrt{2n}\sqrt[4]{pq}}  < \frac1{4\sqrt{pq}}$ and $t = \frac{a}{(1-pqa^2)}$ 
\begin{align}
\frac{t}{2}\gamma_{ij} &\leq \frac{a\sqrt{pq}}{1-pqa^2}\frac{1}{4}\left(4a\sqrt{pq}+\sqrt{16+4a^2pq}\right) \nonumber \\
&\leq \frac{1/4}{1 -1/16} \frac1{4} \left( 4 \cdot \frac1{4} + \sqrt{16 + 4 \cdot 1/16} \right) < 1/2 \nonumber
\end{align}
\end{proof}



\section{Tightness of the lower bound when $q=1$}\label{sec:upper}

We provide two pieces of evidence that the lower bound is morally tight, by examining the special case when $q=1$, and showing that existing tests achieve the aforementioned rate in special cases. Tightness in the general case is still open, but it would not be surprising.

\subsection{Linear Regression}

Consider the classic problem of linear regression under a Gaussian noise assumption. Given $(X_1,Y_1),...,(X_n,Y_n) \in \R^{p+1}$ from underlying linear model
$$
Y_i = X_i^T  \beta + e_i
$$
where $e_i \sim \N(0,\sigma^2)$ and $\beta \in \R^p$ is the unknown set of regression coefficients. 
In other words, $Y_i$ is drawn from a univariate conditional Gaussian distribution $Y | (X=X_i)$ with density 
$
\N(X_i^T \beta, \sigma^2 ).
$ 
It is clear from this expression that $Y$ is independent of $X$ iff $\beta = 0$ whenever $\Sigma_X$ is invertible. Hence, one can perform independence testing by estimating $\hat \beta$ and testing whether it is far from zero or not. How does such an approach compare to a direct independence test that does not proceed through prediction?

Let us first translate this setting into a comparable form to the earlier independence testing setting. Assume that the marginal distribution of $X$ is also Gaussian $\N_p(0,\Sigma_X)$ for a positive definite, invertible $\Sigma_X$. Then the joint distribution of $(X,Y)$ has the density
$$
\N_{p+1} \left( 0 , \left[ \begin{matrix} \Sigma_X & \Sigma_X \beta \\ \beta^T \Sigma_X & \sigma^2 + \beta^T \Sigma_X \beta \end{matrix} \right] \right)
$$ 
as can be  verified by the Schur-complement formulae. We can consider the cross-covariance $\Sigma_{XY}$ to just be the vector $\Sigma_X \beta$ and hence $\Sigma_{XY} = 0$ iff $\beta = 0$.

When specialized to $\Sigma_X = I$, $\sigma^2=1$, i.e. orthonormal design setting, Theorem 3.2 in \cite{wang2013generalized} suggests that their generalized F-test statistic has non-trivial power whenever $p/n \in (0,1)$, and $n$ is at least as large as $\sqrt{p}/\|\beta\|_2^2$. This matches our conditions when $q=1$.

\subsection{Two Sample Testing}


Two sample testing is another fundamental decision-theoretic problem, closely related to independence testing as we shall see below.

Consider  getting data $(X,Y)$ from the following generative model: $W\sim \text{Ber}(1/2)$, $X |W \sim W  {\cal N}_p(\mu_{1},I) + (1-W) {\cal N}(\mu_{2},I_p)$ and $Y = 2W-1$. Suppose we want to test whether $X$ is linearly independent of $Y$. In this case, 
$q=1$ and $\Sigma_{XY} = \E[(X - (\mu_{1} + \mu_{2})/2)(Y)] = (\mu_{1}-\mu_{2})/2$.

The reader may notice on closer inspection that this just a two-sample testing problem in disguise, where we are testing if $\mu_{1}=\mu_{2}$ or not. Our theorem predicts that any test will have non-trivial power only when $n$ is larger than $\sqrt{p}/\|\mu_{1} - \mu_{2}\|^2$. This matches the lower and upper bounds given in \cite{RamRed15c}, Theorem 4 and Section 4.1, in the special case of $q=1$ and the covariances of $Z_1,Z_2$ both being the identity.

%

\section{Conclusion}

In this paper, we prove the first minimax bounds for linear independence testing (without sparsity assumptions). Interesting open problems include (a) finding matching upper bounds under general settings, presumably with a test statistic of the form $\|\hat \Sigma_{XY}\|_F^2$, (b) finding lower bounds in settings with sparsity, and (c) proving lower bounds for more general \textit{nonlinear} independence testing settings (a topic of recent interest using kernels of distance based methods).

\subsection*{Acknowledgments}

The authors acknowledge NSF grant IIS-1247658 and AFOSR YIP FA9550-14-1-0285.

\bibliography{thesis}
\bibliographystyle{unsrt}

\end{document}